\theoremstyle{plain}
\newtheorem{theorem}{Theorem}[section]
\newtheorem{lemma}[theorem]{Lemma}
\newtheorem{prop}[theorem]{Proposition}
\theoremstyle{definition}
\newtheorem{defn}[theorem]{Definition}
\theoremstyle{remark}
\begin{document}

\articletype{ORIGINAL RESEARCH ARTICLE}

\title{Adaptive Resources Allocation CUSUM for Binomial Count Data Monitoring  with Application to COVID-19 Hotspot Detection}

\author{
\name{Jiuyun Hu\textsuperscript{a}, Yajun Mei\textsuperscript{b}, Sarah Holte\textsuperscript{c}, Hao Yan\textsuperscript{a}\thanks{CONTACT A.~N. Author. Email: haoyan@asu.edu}}
\affil{\textsuperscript{a}School of Computing and Augmented Intelligence, Arizona
State University, Tempe, AZ, USA; \textsuperscript{c}School of Industrial and Systems Engineering, Georgia Institute of Technology, Atlanta,
GA, USA; \textsuperscript{c}Division of Public Health Sciences, Fred Hutchinson Cancer Research Center, Seattle, WA, USA}
}

\maketitle

\begin{abstract}
In this paper, we present an efficient statistical method (denoted as "Adaptive Resources Allocation CUSUM") to robustly and efficiently detect the hotspot with limited sampling resources. Our main idea is to combine the multi-arm bandit (MAB) and change-point detection methods to balance the exploration and exploitation of resource allocation for hotspot detection. Further, a Bayesian weighted update is used to update the posterior distribution of the infection rate. 
Then, the upper confidence bound (UCB) is used for resource allocation and planning. Finally, CUSUM monitoring statistics to detect the change point as well as the change location. For performance evaluation, we compare the performance of the proposed method with several benchmark methods in the literature and showed the proposed algorithm is able to achieve a lower detection delay and higher detection precision. Finally, this method is applied to hotspot detection in a real case study of county-level daily positive COVID-19 cases in Washington State WA) and demonstrates the effectiveness with very limited distributed samples. 

\end{abstract}

\begin{keywords}
Multi-arm bandit, change point detection, adaptive resources allocation, count data, CUSUM Statistics
\end{keywords}


\section{Introduction \label{sec:Introduction}}
Nowadays, streaming data from multiple data sources or streams have become more and more common in public health surveillance. Rapid \textit{hotspot} detection for the streaming data across different spatial regions over time is important and can provide valuable information to the decision-makers to mitigate the risk as soon as possible. In this paper, we define the \textit{hotspot} as the structured outliers that are persistent after a certain time point. Besides, we also assume that full observation is not possible due to the limited sampling resources. 

A motivating example for this research is to monitor the number of COVID-19 confirmed cases for different spatial regions in the United States. Monitoring the entire population is not possible due to the limited testing resources or labels. Here, we would like to focus on adaptively distributing the testing resources such that the hotspot can be detected as soon as possible. Please see Section \ref{sec:Motivation} for a more detailed description of the COVID-19 monitoring problem.

Generally speaking, hotspot detection in spatio-temporal data can be treated as a change-point detection problem. Take the COVID-19 monitoring as an example. There are three types of challenges for the hotspot detection problem: 1) High-dimensionality: the number of sites or locations to be monitored is often quite large. 2) Spatial sparsity of the hotspot: the affected counties are often quite sparse in the high-dimensional space. It implies that the counties undergoing significant changes in the number of confirmed cases should be small at certain time points. 3) Temporal consistency: the hotspots should last for a reasonably long period of time.

Besides the aforementioned challenges, there are also additional challenges in sequential change detection related to the limited sensing resource as follows: 1) Lack of access to the fully observational data: existing research for high dimensional sequential sparse change detection focuses on a fully observable process, i.e., at each sampling time point, all the variables can be observed for analysis. However, in reality, it is infeasible to acquire full measurements for the entire population. 2) Partial sampling with count data observation: existing research often focuses on the continuous observational data and often assumes that a location is either observed or not observed. However, in many examples, such as the COVID-19 monitoring case, the decision variables are not binary, which implies that you can distribute more test kits to a particular region to obtain a more accurate estimation of the percentage of the infected person. In such cases, the distribution of the sampling resources is much more complicated than simply deciding whether a region should be observed or not. 3) Adaptive decision-making to balance the exploration and exploitation: On one hand, we would distribute more tests in the particular region that has been detected with a high infection rate. On the other hand, other unobserved regions may be potentially at risk as well, and some testing resources should be distributed in these regions as well. 
Overall, the trade-off between exploration and exploitation is a fundamental challenge in multi-arm bandit (MAB) and reinforcement learning. Recently, Such a balance is also an important aspect to be considered in adaptive resource allocation problems for change point detection, e.g., in \cite{xu2021optimum, zhang2019partially, guo2020partially, zhang2020bandit}. Take Figure \ref{figure-US infection rate} as an example, the infection rates in Florida and Louisiana were higher than in other states on Sep 13, 2020. However, the resource allocation algorithm should distribute enough samples randomly to all states to discover such a fact (i.e., exploration) before distributing more tests to these two states specifically (i.e., exploitation) for final confirmation and change point detection.

\begin{figure}
        \centering
        \includegraphics[scale=.6]{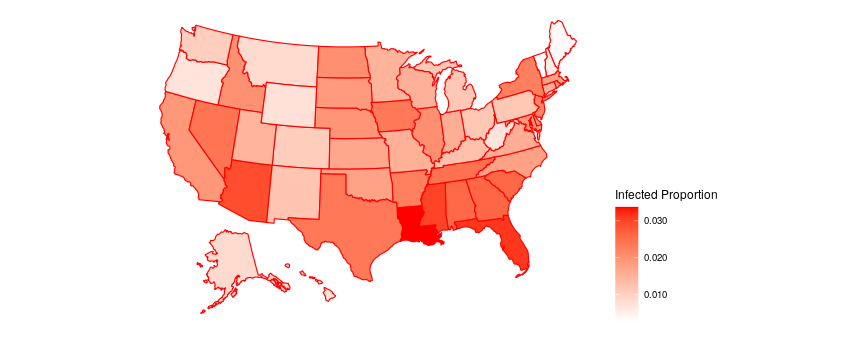}
        \caption{Infection Rate on Sep 13, 2020 in the United States}
        \label{figure-US infection rate}
    \end{figure}


This paper aims to develop an adaptive resource allocation method to automatically distribute the number of testing resources into multiple regions for the quickest change point and hotspot detection. More specifically, we will focus on the case where the data follows a binomial distribution, where the total number of tests for each region $i$ (i.e., $n_i$) depends on the number of testing resources distributed to region $i$. Overall, we propose to borrow the idea from the Upper Confidence Bound (UCB) method in solving the MAB problem to automatically distribute the testing resources to balance the exploration and exploitation. Bayesian statistics are used to update the estimation of the infection rate online with the consideration of uncertainty. A "reward" function is proposed based on the CUSUM statistics such that the change point can be identified as soon as possible. 

Intuitively, the algorithm should work adaptively. Typically, the testing resources should be distributed equally to different regions in the early stages (i.e., exploration). If some regions with potentially high infection rates are observed, more testing resources should be distributed in these regions for confirmation (i.e., exploitation). 
Furthermore, for resource allocation, we will borrow the idea from MAB. MAB aims to sequentially allocate a limited set of resources between competing "arms" to maximize their expected gain, where each arm’s reward function is unknown. MAB provides a principled way to balance exploration and exploitation. To apply MAB for change point detection under the sampling constraint, we propose to use MAB to combine the CUSUM statistics into the reward function in the MAB problem. However, different from \cite{zhang2020bandit, guo2020partially}, the "arm" does not refer to whether a single sensor should be observed or not, but the combinations of the number of testing resources (e.g., testing kits) should be distributed to each particular region. Finally, an integer programming algorithm is proposed to optimize such reward function for resource allocation.

The rest of the paper is organized into the following parts. In Section \ref{sec:Motivation}, we describe the motivation example of the problem in detail.
In Section \ref{sec:Literature-Review}, we review some related methods in change point detection. In Section \ref{sec:Proposed-Methodology}, we introduce the formulation and algorithm of the proposed method in detail. In Section \ref{sec:Simulation}, we use the simulation to study the behavior and some properties of the algorithm. A threshold is also generated to be applied to the case study. In Section \ref{sec:Case Study}, we study the case of county-level daily positive cases in Washington State (WA). Section \ref{sec:Conclusion} summarises the result and contributions of this study.

\section{Motivation Example}\label{sec:Motivation}
Since the initial outbreak of the novel coronavirus in early January 2020, the COVID-19 pandemic has rapidly spread across the world and brought enormous disruption to the economy and society. Various emergency measures, such as social distancing, school closures, and economic shutdowns, have been taken by different countries to control the first wave of the pandemic. To balance saving lives and the economy, a condition-based phased approach has since been adopted in the USA, where the strictness of public health measures is set to adapt according to the present epidemic condition. The success of such a phased approach critically depends on the accurate assessment of the current and near-future status of the pandemic.

Much recent research has been focusing on modeling and prediction of the spread of the COVID-19 case report data. For example, \citet{JIANG2020} studied the pandemic from a statistical point of view. The piece-wise linear trend model has been applied to model the positive case growth and the self-normalization-based method was applied to detect the change points and predict cases in the future. 
\citet{tariq2020real} monitored pandemic data in Singapore and estimated the effective reproduction number $R_t$. They also took advantage of different local clusters and some international transmission. Despite various studies to forecast the positive cases and deaths, \citet{luo2020predictive} pointed out that it would be hard to predict the future of the pandemic. The direction of the mutant of a virus is random and, thus, hard to predict whether it will be more contagious or not. For example, the Delta and Omicron variant of COVID-19 caused a remarkable increase of positive cases in the US starting in June 2021 and Jan 2022. 

Despite the prediction and modeling of the COVID-19 spread, many recent works focus on developing a monitoring framework to detect the pandemic outbreak while taking advantage of a more scientific resource allocation method to distribute the test kits and detect the outbreak as soon as possible. 
\citet{scobie2021monitoring monitored and demonstrated the importance of the vaccine in terms of the reduction of the incidence rate ratios (IRRs).}
\citet{astley2021global} used the data from The University of Maryland Global COVID Trends and Impact
Survey (UMD-CTIS) in cooperation with Facebook to monitor the pandemic.
 In general, this algorithm can monitor the COVID-19-related daily spatio-temporal cases with limited testing resources, and in this paper, we call this problem the adaptive partially observed spatio-temporal hotspot detection problem. 
 However, most of these works assume that the data is passively collected but do not focus on how to adaptively distribute the testing resources. \citet{chatzimanolakis2020optimal} discussed the optimal resources allocation problem in a different stage to achieve maximum information gain. 
 
To better understand the COVID-19 status, different types of testing resource is typically distributed in different regions. Centers for Disease Control and Prevention (CDC) has classified the testing for COVID-19 into the following two categories: 1) \textbf{Diagnostic testing} is intended to identify current infection in individuals and is performed when a person has signs or symptoms consistent with COVID-19, or is asymptomatic, but has recently known or suspected exposure to COVID-19. 2) \textbf{Screening tests} are recommended for unvaccinated people to identify those who are asymptomatic and do not have known, suspected, or reported exposure to COVID-19. Screening helps to identify unknown cases so that measures can be taken to prevent further transmission.\cite{covid192022}

Overall, the screening test is very useful in testing employees in a workplace setting or universities, testing a person before or after travel, or randomly distributed test in some underdeveloped areas to identify unknown cases so that measures can be taken to prevent further transmission. In this paper, we will focus on hotspot detection for the case report data with limited screening test resources. For simplicity, we assume that the screening test is the only available source of information, and limited testing resource is available. 

In this paper, we will use the real COVID-19 test report data from Johns Hopkins University Center for Systems Science and Engineering (JHU CSSE) (\citet{dong2020interactive}). The dataset is available on \url{https://github.com/CSSEGISandData/COVID-19}. 
More specifically, we will use the confirmed COVID-19 cases from all 39 counties in WA. 
The source of daily positive cases in WA is the Department of Health (\url{https://www.doh.wa.gov/Emergencies/COVID19}). The time-series data is updated daily around 23:59 (UTC). We will use the data from Jan 23, 2020, to Sep 13, 2020, a total of 235 days, as an illustration. On each day, the confirmed cases are recorded in all counties in the United States. We aim to identify hotspots in such a region and see if we can discover the hotspot with a much smaller number of screening test resources.

In this study, we will focus on the distribution of the number of screening testing resources for efficient hotspot detection. With this technique, we are able to conduct a real-time online monitoring process and actively emphasize the regions potentially at risk. 
First, we showed the infection rates in some example counties in Figure \ref{Total-WA-cases}. From this figure, we can see that the positive rates of Yakima County started to rise around day 70, and increased dramatically. Furthermore, we have added a figure of the proportion of infected cases in some of the time points during the spread of the COVID-19 data in Figure \ref{fig:data-summary}. We can see that the infection rates on day 100 are not very high for all counties. The infection rates in some of the counties started to rise.

\begin{figure}
        \centering
        \includegraphics[width=0.7\linewidth]{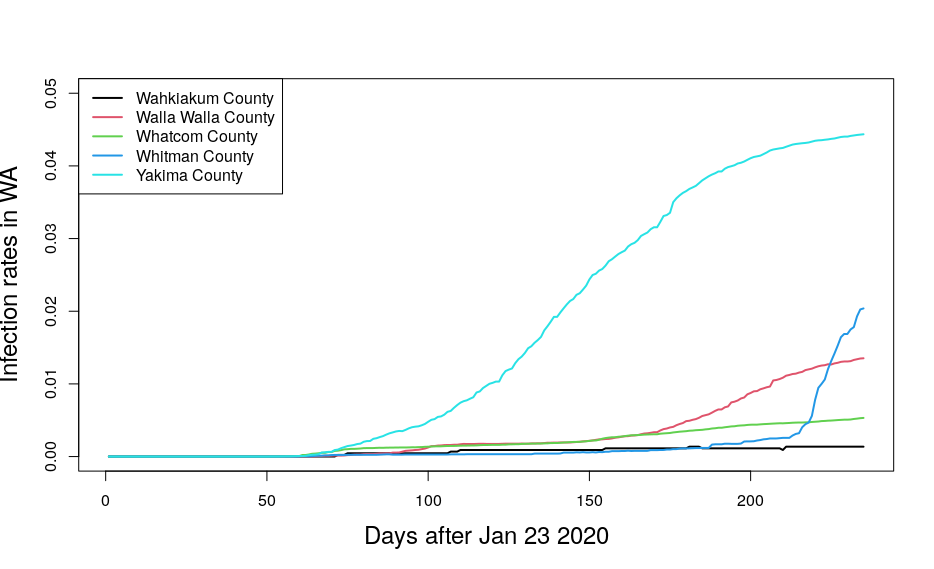}  
        \caption{Example of infection rates in WA}
        \label{Total-WA-cases}
\end{figure}

    
\begin{figure}[H]
        \begin{subfigure}{.5\textwidth}
            \centering
            \includegraphics[width=\linewidth]{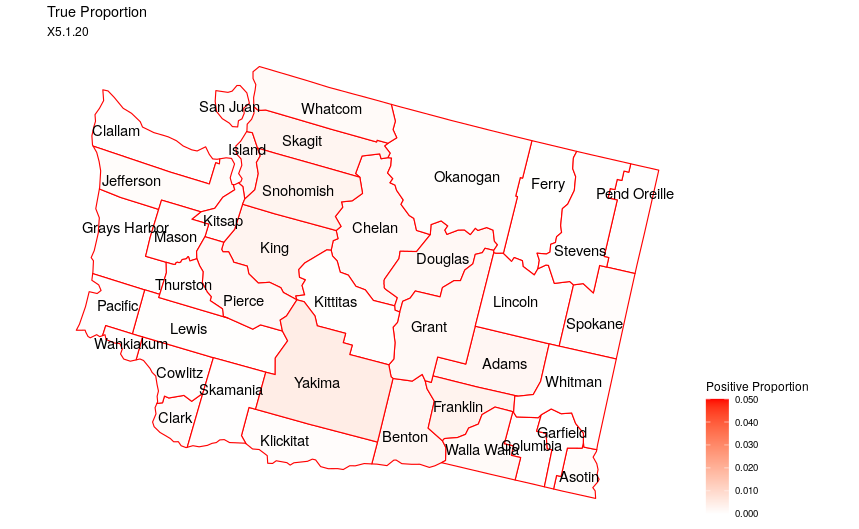}
            \caption{Infection proportion on day 100: May 1 2020}
            \label{Proportion-Example-100}
        \end{subfigure}
        \begin{subfigure}{.5\textwidth}
            \centering
            \includegraphics[width=\linewidth]{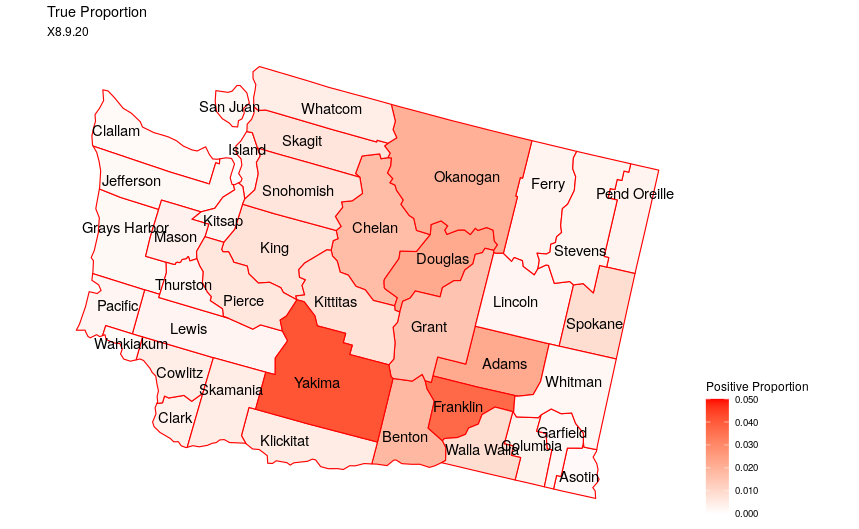}
            \caption{Infection proportion on day 200: Aug 9 2020}
            \label{Proportion-Example-200}
        \end{subfigure}
        
    \caption{Comparison of county-wise infection rate in WA on different dates}
    \label{fig:data-summary}
\end{figure}

\section{Literature Review \label{sec:Literature-Review}}
In this Section, we will present a literature review on the change point detection methodology that can be potentially applied to the proposed spatio-temporal problem, including change point detection on the high dimensional data stream, change point detection with sampling control, and change-point detection on count data.

\subsection{Change Point Detection for multiple data streams}

Originally, the change point detection is first proposed to detect some sudden changes in the \textit{univariate data stream} with the shortest detection delay. The CUSUM statistics is proposed by \citet{page1954continuous, page1955test} to minimize the supremum average detection delay  \cite{moustakides2009numerical}. Then, the Shiryaev\textendash Roberts procedure was introduced by \citet{shiryaev1961problem} in a Bayesian decision framework. Please see \cite{truong2020selective} for a review paper of the existing change point detection methods. However, as mentioned, these methods cannot be applied to problems with multiple data streams.

To extend the change-point detection method for \textit{multiple data streams}, \citet{mei2010efficient} extended the CUSUM statistics to multiple data streams by the summation of the CUSUM statistics for each data stream. \citet{xie2013sequential} compared the mixed procedure method they developed with some generalized log-likelihood ratio-based procedures to detect the change point in multiple but unknown data streams under the assumption that the data are normally distributed with known variance, and the change is a mean shift with unknown post-change mean. There are some works to extend these methods to multiple change-point detections with abrupt changes, such as \cite{zhang2021dynamic,avanesov2018change}, which considered the functional subspace learning and exhaustively search of the change point. \citet{enikeeva2019high} studied the change point detection when the number of dimensions and the length of each data stream goes to infinity based on the maximum of scan statistic and linear test statistic. However, these methods typically assume that the data follow a normal distribution or the data is fully observed. 

\subsection{Change Point Detection for Partially Observed Data with Sampling Control}
Another line of research focuses on detecting anomalies for the \textit{spatio-temporal data}. For example, \citet{das2019spatio} applied some tests to study the change point of winter maximum, minimum, and average temperature from $6$ meteorological stations in 102 year time period. \citet{chen2020s} proposed a new $S^3T$ statistic to detect change points in spatio-temporal data with mean shift or covariance structure change. \citet{zhao2019composite} developed a likelihood-based algorithm to solve the multiple change detection in non-stationary spatio-temporal processes by splitting the data into piecewise stationary processes. \citet{knoblauch2018spatio} developed a Bayesian online change-point detection algorithm to model the process between change points in spatio-temporal data. \citet{zhang2018multiple} studied the change point detection problem when there are system-inherent variations, and sensor feature differences and is able to identify the local change point in each cluster. Again, these methods focus on the fully observed data and do not provide a method to iteratively update the sampling resources. 

To generalize the change point detection for \textit{partially observed data}, some works focus on the change point detection by assuming that only partial data streams at each time point can be observed due to limited sampling resources. Such setting has been studied in social network monitoring \cite{farajtabar2015back}, biological network \cite{raue2009structural}, and manufacturing systems \cite{liu2015adaptive}. To deal with partially observed data, \citet{xie2012change} developed a change point detection method by combining subspace manifold learning, and multiscale analysis to solve the missing data problem. \citet{dubey2021online} studied the online change point detection problem in the network where the pattern of the missing data of the network is heterogeneous. \citet{corradin2022bayesian} studied multiple change-point detections in multivariate time series with missing values. The conditional distribution of each data point given the rest data is derived to handle the missing value problem. However, these methods do not actively control which subsets of data to measure, which hinders the detection performance. 

Recently, many existing works have focused on the change-point detection problem with sampling control, where the sensor(s) to be observed in each epoch are determined adaptively based on the current and historical observation. For example, \citet{liu2015adaptive} proposed to combine CUSUM statistics and a top-$r$ strategy to select the data streams to be observed for online change detection. \citet{xu2021multi} studied the problem when there is only one sensor to be observed at each time and an adaptive sampling strategy based on the CUSUM statistic and showed that it is asymptotically optimal in the sense of minimizing the detection delay. \citet{zhang2019partially} developed an online learning algorithm for anomaly detection for independent data streams using the combinatorial MAB approach. \citet{zhang2020bandit} studied the change point detection under sampling control by the combination of Thompson Sampling and a top-$r$ local Shiryaev-Robert statistic to raise a global alarm. 
\citet{guo2020partially} proposed a Bayesian spike-and-slab prior distribution to decompose the data into the smooth background and sparse anomaly and focus on identifying the anomaly by selecting the sensors to observe using Thompson sampling.
However, all aforementioned methods focus on change detection under the continuous observation data and assume each sensor is either fully observed or not, which cannot be used for more complex resource allocation problems for anomaly detection. 

\subsection{Change Point Detection on Counting Data}

Much existing research has been focusing on extending the change point detection method for \textit{count data or binary data}. For example, \citet{hinkley1970inference} studied the change point detection problem for binary variables based on the asymptotic distribution of the likelihood ratio test statistics. \citet{yu2013change} developed the CUSUM statistics to study the change point detection problem in the binomial thinning process and applied it to the 2001-2012 influenza data.
\citet{jiang2011weighted} proposed a weighted CUSUM control chart for monitoring Poisson processes with varying sample sizes. 
\citet{gut2002truncated} studied the change point detection problem in renewal counting data with the discrete-time frame, where the term `partially observed' meant the discrete-time frame. \citet{ellenberger2021exact} developed a binary segmentation method to solve the change point detection problem when the sample size of the binomial distribution is small. However, these methods typically assume that the sample size of the binomial distribution is pre-determined, which cannot be used to solve the resource allocation problem. 


\section{Proposed Methodology \label{sec:Proposed-Methodology}}

In this Section, we will explore the adaptive sampling method on discrete data with the binomial distribution.
In Section \ref{subsec:Problem-Formulation}, we will review the overall problem formulation. In Section
\ref{subsec:ProposedAlgorithm}, we will introduce the procedure of the proposed algorithm. In Section
\ref{subsec:Tuning-Parameter-Selection}, we will introduce the tuning parameter selection procedure for the proposed algorithm.


\subsection{Problem Formulation \label{subsec:Problem-Formulation} }
We assume that the data is $X_{k,t}$, where $t=1,2,\ldots$ is the time index and $k=1,2,\ldots,K$ is the region index. 
In the COVID-19 monitoring example, $X_{k,t}$ can be the number of  newly observed confirmed cases in region $k$ and time $t$. 

We further assume that all $X_{k,t}$ prior-change follows a binomial distribution $f_k^0$ as $\text{Bin}(c_{k,t},p)$, with the number of test $c_{k,t}$ and the probability $p$. Furthermore, we assume that after the change $t>t_0$, only one hotspot region (i.e., $k \in \mathcal{H}$) is impacted by the change. In contrast, we assume that the regions affected by the changes follow another post-change distribution $f_{k}^1$ where $f_{k}^1$ is another binomial distribution as  $\text{Bin}(c_{k,t},q)$, with the number of tests as  $c_{k,t}$ with the probability $q$, as shown in Eq. (\ref{eq: Binomial}). The superscript $0$ means in control, and the superscript $1$ means out of control.

\begin{equation}
X_{k,t} \sim \begin{cases}
			\text{Bin}(c_{k,t},p), & t\le t_0 \text{ or } t > t_0, k \notin \mathcal{H} \\
            \text{Bin}(c_{k,t},q), & t> t_0, k \in \mathcal{H} 
		 \end{cases} 
         \label{eq: Binomial}
\end{equation}
Finally, we assume that there are limited sampling resources. The constraint we assume is that limited testing kits $C$ are distributed to all the regions at each time $t$ with the constraint $\sum_k c_{k,t}=C$.
For simplicity, we assume that the actively distributed test kits are the only source of observation data. 
The objective of this paper is to sequentially distribute the testing resources $c_{k,t}$  for region $k$ at time $t$ for faster change point detection. 




\subsection{Proposed Adaptive Resources Allocation CUSUM for Binomial Count data (ARA-CUSUM) Method \label{subsec:ProposedAlgorithm}}

In this Section, we first introduce the overall step of the algorithm and then discuss three major components of the proposed methods as follows: monitoring statistics updates, planning for resource allocation, and change-point detection decision. The overall framework is shown in Figure \ref{fig: flow chart}.
The detailed steps are discussed as follows.
\begin{enumerate}
\item \textbf{Monitoring statistics update}: We propose to update the monitoring statistics at each time $t$ for the quickest change-point detection. This procedure will be discussed in detail in Section \ref{subsubsec:Statistics}.
\item \textbf{Planning for adaptive sampling}: We would like to distribute the testing resource for the next time point based on previous results by the UCB bandit algorithm. The detailed planning formulation and the optimization algorithm are discussed in Section \ref{subsubsec:Planning} and Section \ref{subsec:Optimization-algorithm}.
\item \textbf{Change point detection decision}: Finally, the alarm is raised when the largest CUSUM statistics for all regions exceeds some threshold. Finally, the corresponding region that triggers such change can also be used for hotspot identification. 
\end{enumerate}

\begin{figure}
    \centering
    \includegraphics[width=\linewidth]{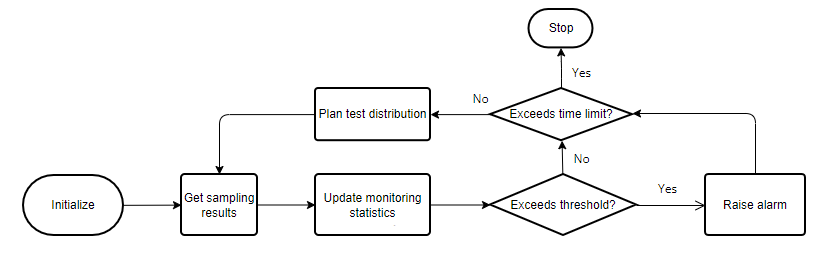}
    \caption{Flow chart for the algorithm}
    \label{fig: flow chart}
\end{figure}


\subsubsection{Monitoring Procedure \label{subsubsec:Statistics} }
In this Section, we will first introduce the definition of CUSUM statistics and then derive the monitoring procedure for online change detection. 
\begin{defn}
Suppose in a sequential data, $x_n$ is the score of $n$th observation. The CUSUM statistics \citet{page1954continuous} is defined by 
\begin{equation}
S_0=0, ~~ S_n = \max\{0, S_{n-1}+x_n\},(n\ge 1)
\end{equation}

\end{defn}
\citet{moustakides1986optimal} suggested that the CUSUM statistic is mini-max optimal in the sense of detection delay \cite{lorden1971procedures}. 
\begin{equation}
    D(T)=\sup_{\nu\ge 0}\text{ess}\sup \mathbb E_\nu [(T-\nu)^+|X_1,\ldots,X_\nu]
\end{equation}

with the constraint of in-control average run length $\mathbb E_\infty[T]\ge\gamma>0$.
In change point detection, the score $x_n$ for each observation can be the log-likelihood ratio, which is $x_n=\log\frac{f^1(X_n)}{f^0(X_n)}$, where $X_n$ is the $n$th observed data, $f^0$ is the pdf of $X_n$ before change and $f^1$ is the pdf of $X_n$ after change.

In the proposed formulation, $X_{k,t}$ is defined as the observed positive cases in time $t$ and region $k$. Inspired by the CUSUM statistics definition, the CUSUM statistics $W_{t}^{k}$ is formulated as: 
\begin{equation}
W_{k,t}=\max\{W_{k,t-1},0\}+\log\frac{f^1_{k}(X_{k,t})}{f^0_{k}(X_{k,t})},\label{eq-CUSUM}
\end{equation}
where $f^0_{k} \sim \text{Bin} (c_{k,t}, p)$ and $f^{1}_{k} \sim \text{Bin}(c_{k,t}, q)$ represent the distribution of null and alternative hypothesis, respectively. The initial value $W_{k,0}$ is set to be $0$ for all $k$. Following the CUSUM statistics definition, we define the CUSUM statistics $W_{k,t}$ in the proposed formulation for each county $k$ and time $t$ in Proposition \ref{prop-CUSUM}.
\begin{prop}
\label{prop-CUSUM} Suppose that at time $t$, the number of tests distributed at region $k$ is $c_{k,t}$,
the number of positive cases collected is $X_{k,t}$. For each region $k$, for $k=1,2,\ldots,K$, the
CUSUM statistics $W_{k,t}$ can be computed for each time $t$ and region $k$ can be derived as 
\begin{equation}
W_{k,t}=\max\{W_{k,t-1},0\}+c_{k,t}\log\frac{1-q}{1-p}+X_{k,t}\log\frac{q(1-p)}{p(1-q)}\label{eq-Update-Single}
\end{equation}
\end{prop}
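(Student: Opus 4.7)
The plan is to start directly from the recursive definition of the CUSUM statistic given in Eq.~(\ref{eq-CUSUM}) and substitute the explicit binomial probability mass functions for $f^0_k$ and $f^1_k$. Since both the null and alternative distributions are binomial with the \emph{same} number of trials $c_{k,t}$, the binomial coefficient $\binom{c_{k,t}}{X_{k,t}}$ appears identically in numerator and denominator of the likelihood ratio and cancels. This is the key structural feature that makes the closed form clean and independent of the combinatorial factor.

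After the cancellation, the log-likelihood ratio reduces to
\begin{equation}
\log\frac{f^1_k(X_{k,t})}{f^0_k(X_{k,t})} = \log\frac{q^{X_{k,t}}(1-q)^{c_{k,t}-X_{k,t}}}{p^{X_{k,t}}(1-p)^{c_{k,t}-X_{k,t}}}
= X_{k,t}\log\frac{q}{p} + (c_{k,t}-X_{k,t})\log\frac{1-q}{1-p}.
\end{equation}
The next step is to regroup the terms so that $c_{k,t}$ multiplies a single logarithm and $X_{k,t}$ multiplies another. Collecting the coefficient of $c_{k,t}$ gives $\log\frac{1-q}{1-p}$, and collecting the coefficient of $X_{k,t}$ gives $\log\frac{q}{p} - \log\frac{1-q}{1-p} = \log\frac{q(1-p)}{p(1-q)}$, which is the log odds-ratio between the post-change and pre-change infection rates.

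Plugging this simplified increment back into the generic recursion $W_{k,t}=\max\{W_{k,t-1},0\}+\log\frac{f^1_k(X_{k,t})}{f^0_k(X_{k,t})}$ yields exactly Eq.~(\ref{eq-Update-Single}). There is really no conceptual obstacle here; the proposition is essentially an algebraic identity, and the only thing worth flagging is that because $c_{k,t}$ enters the pre- and post-change binomial pmfs symmetrically (same $n$, different $p$ versus $q$), the resource-allocation variable shows up linearly in the CUSUM increment. This linearity is what makes the increment interpretable as a trade-off between a deterministic drift $c_{k,t}\log\frac{1-q}{1-p}$ (negative, since $q>p$) and a data-driven boost $X_{k,t}\log\frac{q(1-p)}{p(1-q)}$ (positive), and will be useful later when formulating the UCB-based reward for adaptive allocation.
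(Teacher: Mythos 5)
Your proposal is correct and follows essentially the same route as the paper's proof: substitute the binomial pmfs into the CUSUM recursion, cancel the common binomial coefficient, and regroup the log-likelihood ratio into a $c_{k,t}$ term and an $X_{k,t}$ term. The additional interpretive remark about the negative drift and positive data-driven boost is accurate but not needed for the derivation.
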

The proof is detailed in Appendix \ref{proof-prop-CUSUM}. 


Here, we will raise the alarm when the largest of the CUSUM statistics $\max_{k} W_{k,T}$ exceeds some threshold $c$ at some time $T$. Finally, we will discuss the procedure of selecting the threshold $c$ in Section \ref{subsec:Tuning-Parameter-Selection}. 


\subsubsection{Planning Objective Function for Resource Allocation \label{subsubsec:Planning}}

In this Section, we present an efficient algorithm for resource allocation at the next time point
for hotspot detection. Overall, a good resource allocation algorithm for change detection should achieve a good balance
between exploration and exploitation. Here, exploration in the problem formulation implies the test
kits should be distributed to the regions that were not measured before, and exploitation means
that when the change occurs, the planning will focus more on the region with the change for the quickest change detection.
To better tackle the exploration-exploitation trade-off problem, we apply the UCB as the target function to be optimized. \citet{auer2002using} suggested that the UCB criteria for the MAB problem maximize the final reward.
This method takes advantage not only of the mean reward $\mu$ but also the standard deviation of the reward $\sigma$.
In this case, we should not only focus on the region leading to the larger reward but also the action with larger variability, which has the potential to lead to a larger reward. Specifically, in this hotspot detection application, for the true hotspot region with a larger infection rate, we may want to continue to distribute more test kits to the particular region the next day to encourage further exploitation. However, when a region has not been observed enough in previous days, it typically has a larger variance, and some testing resources will be distributed in the regions with large variability the next time to encourage further exploration.

Given that the log-likelihood ratio $\log\frac{f^1_{k}(X_{k,t})}{f^{0}_k(X_{k,t})}$ for change point detection is often a good quantification of how likely the change may happen in a particular region $k$, here we propose to use the sum of UCB of the likelihood ratio in the CUSUM statistics as the reward function. For the convenience of notation, we denote the log-likelihood ratio in the CUSUM statistics in each region $k$ by 
\begin{equation}
\Delta r_{k,t}=c_{k,t}\log\frac{1-q}{1-p}+X_{k,t}\log\frac{q(1-p)}{p(1-q)}\label{eq-Increment-Single}
\end{equation}

Therefore, the UCB function of the summation of the likelihood ratio statistics can be expressed as

\begin{equation}
f(\mathbf{c}_{t})=\sum_{k=1}^{K}\left[E[\Delta r_{k,t}]+\sqrt{Var(\Delta r_{k,t})}\right]\label{eqUCB}
\end{equation}

Here, the goal is to find the best resource allocation planning method to optimize the resource distribution $\mathbf{c}_{t}=(c_{1,t},\ldots,c_{K,t})$ at time $t$. However, one of the challenges is that the $X_{k,t}$ is the number of positive testing results at time $t$, which is a random variable at time $t$. 
In Section \ref{subsubsec:Bayesian},
we will derive the Bayesian updating procedure to update the posterior distribution of $X_{k,t}$, which can be used estimate the $E[\Delta r_{k,t}]$ and $Var(\Delta r_{k,t})$.

\subsubsection{Bayesian Updating of $X_{k,t}$ \label{subsubsec:Bayesian} }

To estimate the distribution of $X_{k,t}$, we assume that it follows a binomial distribution of $\text{Bin}(c_{k,t},p_{k})$, where $p_{k}$ is the true infection rate to be estimated. Therefore, an online estimation of the rate is needed for better planning and resource allocation. Here,  we propose a weighted Bayesian update method to estimate the infection rate in Proposition \ref{prop-posterior}. 
The reason to consider such a weighted updating procedure is that the samples in recent time points are typically more likely to represent the current state of the system and should be used to better detect the change. Therefore, more weight should be put into the recent time points. For simplicity, we propose to use a exponential decayed weight $\lambda_t = w^{T-t}$ for each sample $X_{k,t}$. 

Finally, the posterior distribution  $p_{k,t}$ can be derived as 
\begin{equation}
    P(p_{k}|X_{k,1},\cdots, X_{k,t}) \propto P(p_{k}) \prod_{t=1}^T P(X_{k,t}|p_{k})^{\lambda_t},
\end{equation}  
where  $P(p_{k})$ is the prior distribution of the infection rate, which is assumed to follow a Beta distribution as  $\text{Beta}(a,b)$. 
 $P(p_{k}|X_{k,1},\cdots, X_{k,t})$ is the posterior distribution of $p_k$, which can be derived in Proposition \ref{prop-posterior}. 

\begin{prop}
\label{prop-posterior} Suppose that the data $(c_{k,t}, X_{k,t})$ of the $k^{th}$ region until time $T$ is collected, where $c_{k,t}$ is the number of total tests distributed to region $k$ and time $t$ and $X_{k,t}$ is the daily positive cases. We further assume the prior distribution of $p_{k}$ is $\mathrm{Beta}(a,b)$,
the weighted posterior distribution of $P(p_{k}|X_{k,1},\cdots, X_{k,T})$ can be derived as
\begin{equation}
\mathrm{Beta}(a+\sum_{t=1}^{T}X_{k,t}w^{T-t},b+\sum_{t=1}^{T}(c_{k,t}-X_{k,t})w^{T-t})\triangleq \mathrm{Beta}(\alpha_{k,t},\beta_{k,t})\label{Distribution}
\end{equation}
\end{prop}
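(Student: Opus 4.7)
The plan is a direct conjugacy calculation, exploiting the fact that raising a binomial likelihood to a scalar power preserves the $p_k^{\bullet}(1-p_k)^{\bullet}$ kernel structure, so the Beta prior stays conjugate even under the exponential weighting $\lambda_t=w^{T-t}$.

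First I would write the prior density $P(p_k)\propto p_k^{a-1}(1-p_k)^{b-1}$ on $[0,1]$ and the per-period likelihood $P(X_{k,t}\mid p_k)=\binom{c_{k,t}}{X_{k,t}}p_k^{X_{k,t}}(1-p_k)^{c_{k,t}-X_{k,t}}$. Raising to the power $\lambda_t$ gives $P(X_{k,t}\mid p_k)^{\lambda_t}\propto p_k^{\lambda_t X_{k,t}}(1-p_k)^{\lambda_t(c_{k,t}-X_{k,t})}$, since the binomial coefficient contributes only a $p_k$-free multiplicative constant that can be absorbed into the normalizing constant of the posterior.

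Next I would substitute into the defining proportionality
\begin{equation*}
P(p_k\mid X_{k,1},\ldots,X_{k,T})\;\propto\;P(p_k)\prod_{t=1}^{T}P(X_{k,t}\mid p_k)^{\lambda_t},
\end{equation*}
collect all exponents of $p_k$ and $1-p_k$, and use $\lambda_t=w^{T-t}$ to obtain the kernel
\begin{equation*}
p_k^{\,a+\sum_{t=1}^{T}X_{k,t}w^{T-t}-1}(1-p_k)^{\,b+\sum_{t=1}^{T}(c_{k,t}-X_{k,t})w^{T-t}-1}.
\end{equation*}
Recognizing this as the (unnormalized) density of $\mathrm{Beta}(\alpha_{k,t},\beta_{k,t})$ with $\alpha_{k,t}$ and $\beta_{k,t}$ as defined in the statement gives the claim, with the normalizing constant fixed automatically by the Beta function $B(\alpha_{k,t},\beta_{k,t})$.

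There is no real obstacle here; the only point requiring a line of justification is that the weighted pseudo-likelihood $\prod_t P(X_{k,t}\mid p_k)^{\lambda_t}$ is no longer a proper likelihood for any Bernoulli experiment when $\lambda_t\neq 1$, so the resulting ``posterior'' is a power-prior / exponentially-weighted posterior rather than a Bayes posterior in the strict sense. I would include a brief remark on this and note that the independence of $X_{k,1},\ldots,X_{k,T}$ given $p_k$ is what lets the product over $t$ factor cleanly before the weights are applied. Otherwise the derivation is a one-step exponent-bookkeeping argument.
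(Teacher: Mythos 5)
Your proposal is correct and follows essentially the same route as the paper's proof in the appendix: write the Beta prior kernel, raise each binomial likelihood to the power $w^{T-t}$, collect exponents of $p_k$ and $1-p_k$, and identify the resulting Beta kernel. Your added remark that the weighted product is a pseudo-likelihood (so this is a power-prior--style posterior rather than a strict Bayes posterior) is a worthwhile clarification the paper omits, but the core argument is identical.
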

The proof is given in Appendix \ref{proof-prop-posterior}. 
Here, the updated posterior distribution of $p_{k}|X_{k,1},\cdots, X_{k,T}$ can be used to calculate the uncertainty in $X_{k,t}$
to derive the UCB function in the planning method.


\subsubsection{Optimization Algorithm for Planning \label{subsec:Optimization-algorithm}}

By plugging in the weighted posterior distribution of  $P(p_{k}|X_{k,1},\cdots, X_{k,T})$  into the UCB reward function in (\ref{eqUCB}), we will derive the objective function to be optimized and propose an integer programming algorithm to optimize the number of test kits $c_{k,T+1}$ in each region $k$ at time $T+1$. 
\begin{prop}
\label{prop-planning-single} Suppose $X_{k,t}\sim \mathrm{Bin}(c_{k,t},p_{k})$ and the posterior distribution  $p_{k}|X_{k,1},\cdots, X_{k,T} \sim \mathrm{Beta}(\alpha_{k,t},\beta_{k,t})$.
The target function $f_t(\mathbf{c}_{T+1})=\sum_{k=1}^{K}\left[E[\Delta r_{k,T+1}]+\sqrt{Var(\Delta r_{k,T+1})}\right]$ can be derived as 
\begin{footnotesize}
\begin{align}
f_t(\mathbf{c}_{T+1}) &= \sum_{k=1}^{K} f_{k,t} (c_{k,T+1})\nonumber\\ &=\sum_{k=1}^{K}\left[\frac{\alpha_{k,t}}{\alpha_{k,t}+\beta_{k,t}} c_{k,T+1}+\sqrt{c_{k,T+1}\frac{\alpha_{k,t}\beta_{k,t}}{(\alpha_{k,t}+\beta_{k,t})(\alpha_{k,t}+\beta_{k,t}+1)}(\frac{c_{k,T+1}}{\alpha_{k,t}+\beta_{k,t}}+1)}\right]\label{eq-Target-Independent}
\end{align}
\end{footnotesize}

\end{prop}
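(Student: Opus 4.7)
The plan is to treat $\Delta r_{k,T+1}$ as an affine function of the single random quantity $X_{k,T+1}$, since $c_{k,T+1}$ is the (deterministic) decision variable and the log coefficients $\log\frac{1-q}{1-p}$ and $\log\frac{q(1-p)}{p(1-q)}$ are constants. Writing $a=\log\frac{1-q}{1-p}$ and $b=\log\frac{q(1-p)}{p(1-q)}$, linearity of expectation gives $E[\Delta r_{k,T+1}]=a c_{k,T+1}+b E[X_{k,T+1}]$, and from $\text{Var}(aY+b Z)$ with $Y$ deterministic we get $\text{Var}(\Delta r_{k,T+1})=b^{2}\text{Var}(X_{k,T+1})$. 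So the whole proposition reduces to computing the predictive mean and predictive variance of $X_{k,T+1}$ under the Beta-Binomial hierarchy, and then matching constants (the paper is effectively absorbing $a$ and $b$ into the UCB, which is legitimate for resource planning because they are common multiplicative/additive constants across $k$).

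For the mean, I would apply the tower property: conditioning on $p_{k}$, $X_{k,T+1}\mid p_{k}\sim\text{Bin}(c_{k,T+1},p_{k})$, so $E[X_{k,T+1}\mid p_{k}]=c_{k,T+1}p_{k}$, and therefore $E[X_{k,T+1}]=c_{k,T+1}\,E[p_{k}]$. Plugging in the Beta posterior mean from Proposition~\ref{prop-posterior}, namely $E[p_{k}]=\alpha_{k,t}/(\alpha_{k,t}+\beta_{k,t})$, yields exactly the first summand $\frac{\alpha_{k,t}}{\alpha_{k,t}+\beta_{k,t}}\,c_{k,T+1}$ appearing inside the brackets of \eqref{eq-Target-Independent}.

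For the variance I would use the law of total variance,
\begin{equation}
\text{Var}(X_{k,T+1})=E\bigl[\text{Var}(X_{k,T+1}\mid p_{k})\bigr]+\text{Var}\bigl(E[X_{k,T+1}\mid p_{k}]\bigr)=c_{k,T+1}E[p_{k}(1-p_{k})]+c_{k,T+1}^{2}\text{Var}(p_{k}).
\end{equation}
Substituting the Beta moments $E[p_{k}]=\frac{\alpha_{k,t}}{\alpha_{k,t}+\beta_{k,t}}$ and $\text{Var}(p_{k})=\frac{\alpha_{k,t}\beta_{k,t}}{(\alpha_{k,t}+\beta_{k,t})^{2}(\alpha_{k,t}+\beta_{k,t}+1)}$, and using $E[p_{k}(1-p_{k})]=E[p_{k}]-E[p_{k}^{2}]=E[p_{k}](1-E[p_{k}])-\text{Var}(p_{k})$, should collapse to the standard Beta-Binomial variance $\frac{c_{k,T+1}\alpha_{k,t}\beta_{k,t}(\alpha_{k,t}+\beta_{k,t}+c_{k,T+1})}{(\alpha_{k,t}+\beta_{k,t})^{2}(\alpha_{k,t}+\beta_{k,t}+1)}$, which factors as $c_{k,T+1}\frac{\alpha_{k,t}\beta_{k,t}}{(\alpha_{k,t}+\beta_{k,t})(\alpha_{k,t}+\beta_{k,t}+1)}\bigl(\frac{c_{k,T+1}}{\alpha_{k,t}+\beta_{k,t}}+1\bigr)$, matching the term under the square root.

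The only real obstacle is cosmetic: the target function $\eqref{eq-Target-Independent}$ does not carry the constants $a,b$ from $\Delta r_{k,T+1}$, so I would either (i) state explicitly that the planning UCB is normalized by dropping these constants since they act uniformly across all regions $k$ and allocations $c_{k,T+1}$ and hence do not affect the optimizer, or (ii) rescale the claimed formula by $|b|$ and add $a c_{k,T+1}$. The summation step $f_{t}(\mathbf{c}_{T+1})=\sum_{k=1}^{K}f_{k,t}(c_{k,T+1})$ is then immediate by linearity, completing the proof.
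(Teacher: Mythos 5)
Your proposal is correct and follows essentially the same route as the paper's proof: the tower property for $E[X_{k,T+1}]$, the law of total variance for the Beta-Binomial predictive variance, and then dropping the constant $a\sum_k c_{k,T+1}=aC$ and the common factor $\log\frac{q(1-p)}{p(1-q)}$ on the grounds that they do not affect the optimizer under the budget constraint. The paper makes exactly this normalization argument at the end of its proof, so your ``cosmetic obstacle'' is resolved there in the same way as your option (i).
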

The proof is shown in Appendix \ref{proof-prop-planning-single}. 
Therefore, Algorithm \ref{alg:planning} is proposed to determine the best resource allocation $(c_{1,T+1},\ldots,c_{K,T+1})$ at time $T+1$ as follows: 

\begin{align}
\hat{c}_{1,T+1},\ldots,\hat{c}_{K,T+1} & =\arg\max f_t(\mathbf{c}_{T+1})\label{eq-Planning-Independent}\\
 & s.~t.~\sum_{k=1}^{K}c_{k,T+1}\le C,c_{k,T+1}\ge0,k=1,\ldots,K\nonumber 
\end{align}

\begin{figure}
    \centering
    \includegraphics[scale=0.5]{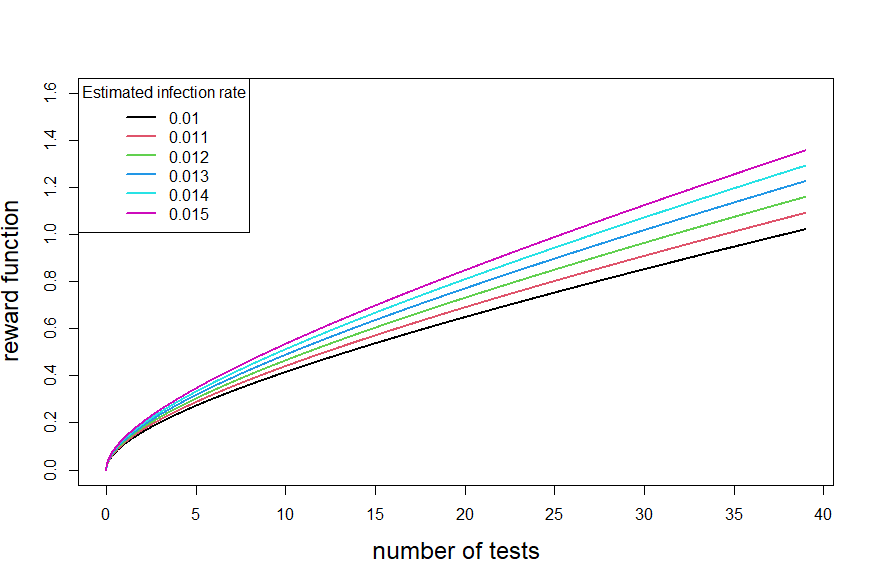}
    \caption{Example of reward functions with different estimated infection rate}
    \label{fig: reward example}
\end{figure}
Here, we would like to give a brief discussion on the reward function $f_{k,t} (c_k)$ defined in \eqref{eq-Target-Independent}. An example of the function  $f_{k,t} (c_k) $ with different infection rates are given in Figure \ref{fig: reward example}. 
The expectation part and standard deviation part is larger for regions with a higher estimated infection rate. However, there is a diminishing return effect on this reward function, which implies that as more testing resources are distributed in the highly infected regions, the increase of the reward function will become smaller.  At a certain point, it may be less rewarding to distribute additional tests to the highly infected region compared to the low infected region. To demonstrate, Figure \ref{fig: reward example} shows that $f_5(x)>f_k(x)$ for $k=1,2,3,4$ and $x\in[1,50]$, but $f_k'(x)>f_5'(y)$ for some $k=1,2,3,4$  and $x,y\in[1,50]$.

To optimize problem \eqref{eq-Planning-Independent}, a greedy algorithm is proposed. The idea of the greedy algorithm is to distribute the testing kits one by one to the region that can result in the largest target increment until all the tests are properly distributed. Given the concavity of $f_{t}(\mathbf{c})$, such a greedy approach can guarantee the global optimum. This property will be discussed in Section \ref{prop-convex}. The greedy algorithm is designed as Algorithm \ref{alg:planning}.

\LinesNumberedHidden{
\begin{algorithm}[t]
\caption{Optimal Resources Planning by the Greedy Algorithm}
\label{alg:planning}
\SetAlgoLined
\Parameter{$\alpha_{k,t}$, $\beta_{k,t}$, $k=1,\ldots,K$}
\KwResult{$c_{k}$, $k=1,\ldots,K$}
\textbf{Step 1:} Initialize $c_{k}=0$ for $k=1,\ldots,K$\\
\textbf{Step 2:} \For{$i=1$ \KwTo $C$}{
    \textbf{Step 2.1:} \For{$k=1$ \KwTo $K$}{
Estimate the increment in region $k$: $\Delta_k=f_k(c_k+1)-f_k(c_k)$.
}
\textbf{Step 2.2:} Let $k_m=\arg\max_k \Delta_k$. $c_{k_m}=c_{k_m}+1$
}
\end{algorithm}}



Finally, the entire process can be  shown in Algorithm 2. Overall, the steps of the algorithm are described here: 1) On day one, we have an initial test distribution. If the prior distribution is set the same for all regions, uniform distribution can be used. The confirmed number of $X_{k,1}$ will be collected on Day 1. 2) The collected dataset $X_{k,t}$ can be used to update the posterior distribution of the rate $p_k$ by \eqref{Distribution}. 3) Such distribution information can be plugged into \eqref{eq-Planning-Independent} to optimize the test distribution of the next day. Such a procedure can be estimated recursively until the current time $T$. 

\LinesNumberedHidden{
\begin{algorithm}
\caption{Proposed ARA-CUSUM Algorithm}
\label{alg: process}
\SetAlgoLined 
\Parameter{$a$, $b$, $w$, $c$} 
\KwResult{$W_{k,t}$,$k=1,\ldots,K$,$t=1,2,\ldots$}
\textbf{Step 1:} Initialize $c_{k,1}$, $X_{k,1}$ and $W_{k,1}$ for $k=1,\ldots,k$\; 
\textbf{Step 2:} \For{$t=1,2,\ldots$}{ 
\textbf{Step 2.1: Bayesian Update:}
Set $\alpha_{k,t}=a+\sum_{i=1}^{t}X_{k,i}w^{t-i}$; $\beta_{k,t}=b+\sum_{i=1}^{t}(c_{k,i}-X_{k,i})w^{t-i}$\; 
\textbf{Step 2.2: Resource Allocation:} Solve   $c_{k,t+1}$ in the optimization problem  \eqref{eq-Planning-Independent} by Algorithm \ref{alg:planning} \; 
\textbf{Step 2.3: Monitoring Statistics Update and Detection:} Update $W_{k,t+1}$ by (\ref{eq-Update-Single}) \; 
\textbf{Step 2.4:} \uIf{$W_{k,t+1}>c$
for some $k$}{ Raise alarm at time $t+1$ and county $k$\; } }
\end{algorithm}}


\subsubsection{Property of the Optimization Algorithm \label{subsec:Property}}
In this subsection, we would like to prove that the greedy approach proposed in Section \ref{subsec:Optimization-algorithm} can obtain the global optimum. 

\begin{lemma}\label{lemma-concavity}
The target function \eqref{eq-Target-Independent} is the sum of $K$ concave functions and is therefore concave.
\end{lemma}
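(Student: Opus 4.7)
The plan is to prove concavity summand by summand: since a sum of concave functions is concave, it suffices to show that each $f_{k,t}(c_{k,T+1})$ is concave in $c_{k,T+1}$ on the relevant domain $c_{k,T+1} \geq 0$. I would fix $k$ and drop subscripts, writing $\alpha = \alpha_{k,t}$, $\beta = \beta_{k,t}$, $x = c_{k,T+1}$, and split
\begin{equation*}
f_{k,t}(x) = \underbrace{\frac{\alpha}{\alpha+\beta}\, x}_{L(x)} \;+\; \underbrace{\sqrt{A x^{2} + B x}}_{g(x)},
\end{equation*}
where $A = \frac{\alpha\beta}{(\alpha+\beta)^{2}(\alpha+\beta+1)}$ and $B = \frac{\alpha\beta}{(\alpha+\beta)(\alpha+\beta+1)}$, both strictly positive (since $\alpha,\beta>0$ throughout, being Beta parameters updated from a positive prior). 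The linear piece $L(x)$ is trivially concave, so the work reduces to showing $g(x) = \sqrt{Ax^{2}+Bx}$ is concave for $x \geq 0$.

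To establish concavity of $g$, I would set $h(x) = Ax^{2} + Bx$ so that $g = \sqrt{h}$ and apply the standard identity
\begin{equation*}
g''(x) = \frac{2 h(x)\, h''(x) - \bigl(h'(x)\bigr)^{2}}{4\, h(x)^{3/2}}.
\end{equation*}
With $h'(x) = 2Ax + B$ and $h''(x) = 2A$, the numerator expands as
\begin{equation*}
4A(Ax^{2}+Bx) - (2Ax+B)^{2} = 4A^{2}x^{2} + 4ABx - 4A^{2}x^{2} - 4ABx - B^{2} = -B^{2} \leq 0.
\end{equation*}
Since $h(x) > 0$ for $x>0$, this gives $g''(x) \leq 0$ on $(0,\infty)$, so $g$ is concave there; continuity extends concavity to $[0,\infty)$. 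Hence $f_{k,t} = L + g$ is concave in $c_{k,T+1}$.

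Finally, summing over $k = 1,\ldots,K$ preserves concavity, which yields the stated result. There is no real obstacle here beyond the algebraic cancellation that produces the tidy $-B^{2}$ in the numerator; the only mild subtlety is to note that $A$ and $B$ are genuinely nonnegative (so the square root is well defined on $x \geq 0$) and that the domain of interest is $x \geq 0$ rather than all of $\mathbb{R}$, since $h$ could become negative for large negative $x$ and the square root would fail to be real.
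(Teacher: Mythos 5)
Your proposal is correct and follows essentially the same route as the paper: decompose each summand as a linear term plus $\sqrt{Ax^{2}+Bx}$ and verify the second derivative of the square-root term is nonpositive, which the paper states as ``one can easily verify'' and you carry out explicitly via the cancellation to $-B^{2}$. The only addition you make is the (sensible) care about the domain $x\ge 0$ and the positivity of $A$ and $B$, which the paper leaves implicit.
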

The target function in each region can be expressed as $f_{k,t}(x)=a x+\sqrt{b x+c x^2}$. One can easily verify that the second order derivative of $f_{k,t}(\cdot)$ is negative when $b,c>0$.

\begin{prop}\label{prop-convex}
The algorithm \ref{alg:planning} can obtain the global optimum for the optimization problem \eqref{eq-Planning-Independent}.
\end{prop}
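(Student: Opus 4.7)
The plan is to combine the concavity established in Lemma \ref{lemma-concavity} with a standard exchange argument from discrete optimization. The objective decomposes as a sum of $K$ concave functions in the individual allocations $c_{k,T+1}$, so the first thing I want to deduce is that for each region $k$ the marginal increment $\Delta_k(n) := f_{k,t}(n+1) - f_{k,t}(n)$ is non-increasing in the non-negative integer $n$. This is the discrete analogue of a non-increasing first derivative and follows directly from concavity applied at $n$, $n+1$, $n+2$. I would also note that each $f_{k,t}$ is strictly increasing in $c_{k,T+1}$, so any optimum of \eqref{eq-Planning-Independent} exhausts the budget; the problem thus reduces to distributing exactly $C$ integer units across $K$ regions to maximize a sum of concave functions.

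The core of the argument is the exchange step. Let $\mathbf{c}^g$ denote the output of Algorithm \ref{alg:planning} and let $\mathbf{c}^*$ be any optimal allocation. If $\mathbf{c}^g \neq \mathbf{c}^*$, then since both sum to $C$, there must exist indices $i, j$ with $c_i^g > c_i^*$ and $c_j^g < c_j^*$. I would then pinpoint the iteration at which the greedy algorithm raised the $i$-th counter from $c_i^*$ to $c_i^* + 1$, and let $c_j^{\mathrm{cur}}$ denote the value of the $j$-th counter at that moment. The greedy selection rule immediately yields $\Delta_i(c_i^*) \ge \Delta_j(c_j^{\mathrm{cur}})$. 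Because the greedy never pushes region $j$ above $c_j^g \le c_j^* - 1$, we have $c_j^{\mathrm{cur}} \le c_j^* - 1$, so the non-increasing marginal property gives $\Delta_j(c_j^{\mathrm{cur}}) \ge \Delta_j(c_j^* - 1)$. Chaining these two inequalities produces $\Delta_i(c_i^*) \ge \Delta_j(c_j^* - 1)$, which means shifting one unit of budget from region $j$ to region $i$ in $\mathbf{c}^*$ cannot decrease the objective. Iterating this exchange transforms $\mathbf{c}^*$ into $\mathbf{c}^g$ while preserving feasibility and objective value, so $\mathbf{c}^g$ is itself optimal.

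The main obstacle I anticipate is purely a bookkeeping one: making the exchange rigorous requires carefully identifying the specific iteration at which greedy increments region $i$ from $c_i^*$ to $c_i^* + 1$ and verifying that the invariant $c_j^{\mathrm{cur}} \le c_j^* - 1$ holds at precisely that moment. Once this invariant is cleanly stated, the two-line inequality chain above immediately closes the proof, and the role of Lemma \ref{lemma-concavity} is nothing more than to guarantee the non-increasing marginals that justify both the greedy rule and the exchange.
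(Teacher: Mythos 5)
Your proof is correct, but it follows a genuinely different route from the paper's. The paper also reduces everything to the non-increasing marginals $g_k(x)=f_{k,t}(x)-f_{k,t}(x-1)$ guaranteed by Lemma \ref{lemma-concavity}, but then argues globally: it arranges all possible increments into a $K\times C$ matrix whose rows are decreasing, observes that any feasible allocation's objective is a sum of $C$ entries forming row-prefixes, and shows that Algorithm \ref{alg:planning} selects exactly the top $C$ entries of the matrix (which, because rows are decreasing, automatically form row-prefixes); optimality follows because no selection of $C$ entries can beat the top $C$. Your argument is instead a local exchange proof: you compare the greedy output to an arbitrary optimum, locate the iteration at which greedy lifts region $i$ past $c_i^*$, and chain the greedy selection rule with the monotone-marginal inequality to show that one unit can be moved from $j$ to $i$ in $\mathbf{c}^*$ without loss, iterating until the optimum coincides with the greedy solution. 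The paper's version is shorter and more visual but leaves implicit the small induction showing that greedy's picks are globally the top $C$; your version carries more bookkeeping (the invariant $c_j^{\mathrm{cur}}\le c_j^*-1$, termination via decreasing $\ell_1$ distance) but is the more robust and standard template, handles ties cleanly, and makes explicit the budget-exhaustion step (strict monotonicity of $f_{k,t}$) that the paper uses silently when it replaces the constraint $\sum_k c_{k,T+1}\le C$ by equality. Both proofs rest on the same two facts -- concavity gives non-increasing increments, and monotonicity forces the budget to be spent -- so either is acceptable.
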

\begin{proof}
From Lemma \ref{lemma-concavity}, $f_{k,t}(\cdot)$ is an increasing and strictly concave function. For any fixed time $t$, we denote $g_k(x)=f_{k,t}(x)-f_{k,t}(x-1)$ be the increment of the reward function to distribute one more tests on region $k$. $g_k(\cdot)$ will be a positive decreasing function for any $k$. Then the total reward function given the test distribution $c_{1,t},\ldots,c_{K,t}$ is 
$$
f_{t}(c_{1,t},\ldots,c_{K,t})=\sum_{k=1}^K \left(\sum_{x_k=1}^{c_{k,t}}g_k(x_k)\right)
$$

Now consider the increment parameter matrix
$$
\begin{pmatrix}
g_1(1) & g_1(2) & \cdots & g_1(C)\\
g_2(1) & g_2(2) & \cdots & g_2(C)\\
\vdots & \vdots & \ddots & \vdots \\
g_K(1) & g_K(2) & \cdots & g_K(C)
\end{pmatrix}
$$

In Algorithm \ref{alg:planning}, the first iteration will choose the largest value in the first column. Since each row in the matrix is decreasing, the first iteration will also choose the largest value in the matrix. Each of the remaining iterations will choose the largest value in the matrix that has not been chosen. Therefore, Algorithm \ref{alg:planning} chooses the top $C$ largest values in the matrix.  And because the target function can be written by the sum of $C$ values in the matrix, Algorithm \ref{alg:planning} gives the optimal solution to the optimization problem \eqref{eq-Planning-Independent}

\end{proof}

\subsection{Tuning Parameter Selection \label{subsec:Tuning-Parameter-Selection}}

There are several tuning parameters that need to be determined for the algorithm. The key parameters are prior distribution parameter $a,b$, weight $w$ and threshold $c$. In this Section, we will discuss the effect of these tuning parameters and how to decide them.
\begin{itemize}
\item \textbf{Prior Distribution}: There are two parameters $a,b$ in the prior distribution. The prior distribution reflects the prior knowledge about the infection rate. Given that the infection rate for the null hypothesis is $p$, we can also set the ratio to be $p$, i.e., $\frac{a}{a+b}=p$, assuming no change happens. Since $p$ is a preset constant, we only need to determine $a$. When $a$ is large, the influence of prior distribution is large, and the difference in the infection rate estimation from the posterior distribution will be smaller. Thus, the difference in test distribution will not be very large. In the simulation, we will choose $a+b=0.5\times C$. 
\item \textbf{Weight $w$}: The weight is used in Bayesian update to get the posterior distribution of $X_{k,t}$. Weight allows the algorithm to take not only the information from one day before but also the previous days. A large weight will make the test distribution in each region smooth across different days. Therefore, when occasionally the observed infection rate is high, the algorithm will only distribute a reasonably larger amount of testing kits on the next day rather than some extreme testing kit distribution. In the simulation, we choose $w=0.3$. 
    \item \textbf{Total Resources $C$}: This parameter determines the total number of tests to be distributed in all regions. Typically, $C$ is pre-determined by the resource constraint. 
    \item \textbf{Threshold $c$}: This parameter determines whether an alarm should be raised in each region every day. Higher threshold $c$ leads to high in-control average run length ($ARL_0$) and higher out-of-control $ARL_1$. Typically, we would like to maintain a preset $ARL_0$, which determines the threshold $c$. Overall, for a preset $C, p, q$, a simulation study can be used to determine the best $C$ with certain $p$ and $q$. 
    We will determine this parameter by the simulation work and use a preset average run length. 
\end{itemize}


\section{Simulation \label{sec:Simulation}}

In this Section, we will start with the simulation setup in Section \ref{subsec:simulation setup} and
then discuss the result evaluation in Section \ref{subsec:result}. The balance of exploration and exploitation is discussed in Section \ref{subsec: exploration and exploitation}. 

\subsection{Simulation Setup \label{subsec:simulation setup}}

In this Section, we will evaluate the proposed methodology using a simulation study. To generate the simulation dataset, we will simulate a true infection rate for each county in the WA as an example. 

There are $39$ counties in WA, i.e., $K=39$.
Before the distribution change, the true infection proportion is set to be $p=0.01$ for all counties. We will compare the detection delay for different infection proportions where the out-of-control infection rates are $q=0.025,0.03,0.04,0.05$ in the first county after the distribution change. We further assume that the total number of tests is $3900$, where each county takes an average of $100$ tests per day. The weighted update parameter $w$ is set to be $0.3$ and the prior distribution parameter $a,b$ are set as the $50\%$ of the total tests to make the prior distribution as 
$a=3900\times50\%\times0.01=19.5$, $b=3900\times50\%\times0.99=1930.5$.

The process of the simulation is described as follows: 
\begin{enumerate}
\item The test distribution $c_{k,t}$ in each county determined from the previous iteration can be used.
\item Before the change point $t<t_0$, we will sample the observed positive cases by $X_{k,t}\sim \text{Bin}(c_{k,t},p)$.
\item After the change point $t>t_0$, we will sample the observed positive cases in the first county by the out-of-control infection rate $q$ and $X_{1,t}\sim \text{Bin}(c_{1,t},q)$. We will sample the rest counties still by $X_{k,t}\sim \text{Bin}(c_{k,t},p)$
\end{enumerate}

Finally, we will follow Algorithm \ref{alg:planning} for the proposed algorithm. The CUSUM statistics are updated, and the test distribution of the next day is determined by the proposed method or benchmark methods. When studying the average run length and detection delay, we will stop the process when the alarm is raised. 

In this simulation study, we will compare the proposed algorithm with the following two benchmark methods. 
\begin{enumerate}
\item \textbf{Even Distribution}: The first benchmark (denoted as Even) is a simple approach, which always evenly distributes the tests to all the counties. This even distribution will focus on exploration of the testing resources but do not have the power to exploit for a specific region. 
\item \textbf{Top-R Distribution}: The second benchmark (denoted as top-R') is inspired by the Top-R strategy proposed in \cite{liu2015adaptive}. We first evenly distribute all tests evenly into $20$ batches. For in total $3900$ testing kits, it averages out $195$ testing kits per batch. We then select the top $20$ regions with the largest CUSUM statistics in the previous day. Here, we select the top $20$ regions, given it is a little more than half of the whole regions.
\end{enumerate}

\subsection{Evaluation of the Detection Delay and Precition \label{subsec:result}}


To compare this algorithm with the benchmarks, we will first need to determine the thresholds for the proposed method and benchmark methods. Here, binary search can be used to decide the optimal threshold for each benchmark method to maintain the in-control average run length (ARL) as 200 for all benchmark methods with 1000 replications.

To evaluate the proposed methods, we use the following three metrics: 1) the out-of-control average run length ($\mathrm{ARL}_1$): $\mathrm{ARL_1}$ is defined as the average detection delay after the change has happened. Overall, we aim to reduce the ARL when keeping the same $ARL_0 = 200$; 2) Detection precision (DP): DP is evaluated as the proportion in the $1000$ iterations that the alarm is raised in the correct county. Overall, we would aim to increase the DP so that we can identify the county that is accountable for the change more accurately.
3) the standard deviation of run-length (SDRL): SDRL is evaluated by computing the standard deviation with $1000$ iterations. Overall, a smaller SDRL would imply a more robust detection with smaller variability for detection delay.
These three metrics for the proposed method and the benchmark methods (namely the evenly distributed and the Top-R distributed) are evaluated in Table \ref{table-result-1}. 

From Table \ref{table-result-1}, we can conclude that ARL is the smallest for the different $q$ values. Especially, when $q$ is small (i.e., 0.025 and 0.03), the detection delay can even be greatly reduced (i.e., 7.893 for "proposed", 14.885 for "Even", and 10.385 for "Top-R" for $q=0.025$). For larger $q$, the differences between the proposed methods and the second best method, such as "Top-R" are similar, given it becomes easy to detect all the methods. However, the proposed method still outperforms others in this case. Finally, we find that the SDRL of the proposed method is also the smallest, especially when $q$ is small. This implies that the proposed methods can also make the detection much more robust by reducing the standard deviation of the detection delay.

\begin{table} 
\caption{ The result of detection delay and detection precision when $ARL_0 \approx 200$}
\centering 
\begin{tabular}{|c|c|c|c|c|c|} 
\hline 
Metric & Methods  & $q=0.025$ & $q=0.03$ & $q=0.04$ & $q=0.05$\tabularnewline
\hline 
\multirow{3}{*}{$\mathrm{ARL_1}$} & Proposed & $\mathbf{7.893}$ & $\mathbf{4.958}$ & $\mathbf{3.388}$ & $\mathbf{2.863}$\tabularnewline
\cline{2-6} 
 & Even  & $14.885$ & $7.93$ & $4.408$ & $3.299$\tabularnewline
\cline{2-6}
& Top-R & $10.385$ & $5.688$ & $3.443$ & $2.891$\tabularnewline
\hline 
\multirow{3}{*}{DP} & Proposed & $\mathbf{0.918}$ & $\mathbf{0.932}$ & $0.938$ & $0.939$\tabularnewline
\cline{2-6} 
 & Even & $0.892$ & $0.926$ & $\mathbf{0.939}$ & $\mathbf{0.943}$\tabularnewline
\cline{2-6}
& Top-R & $0.904$ & $0.918$ & $0.936$ & $0.937$\tabularnewline
\hline
\multirow{3}{*}{SDRL} & Proposed & $\mathbf{4.67}$ & $\mathbf{2.47}$ & $\mathbf{1.31}$ & $\mathbf{1.01}$\tabularnewline
\cline{2-6} 
 & Even  & $12.81$ & $5.48$ & $2.28$ & $1.45$\tabularnewline
\cline{2-6}
& Top-R & $10.556$ & $3.72$ & $1.63$ & $1.12$\tabularnewline
\hline 
\end{tabular}
\label{table-result-1}
\end{table}

\subsection{Exploration and Exploitation}\label{subsec: exploration and exploitation}

As formerly mentioned, a good change point detection with a sampling control algorithm should balance both exploration and exploitation behavior. In this simulation, exploration means that we will need to explore all counties, given that any county can undergo large changes at certain times. Exploitation means that we will need to distribute more resources to the counties that have been detected or have the potential to be the hotspot. Figure \ref{Simu_Distribution} and Figure \ref{Simu_County} give a more intuitive understanding of the exploration and exploitation behavior of this proposed algorithm from the following simulation study setup. In the first $500$ days, all the counties are in control with the same infection rate $p=0.01$. From day $501$ to day $1000$, only the first county, Adams County is out of control with the infection rate $q=0.05$, while the rest counties stay the same with $p=0.01$. Figure \ref{Simu_Distribution} shows the density and the boxplot of the number of tests in control.  Figure \ref{Simu_County} shows the median number of test kits in each county under different conditions, in control and out of control.

Here are some behaviors of the algorithm. 
When there is no distribution change, the distributions of the number of tests in each county are similar, see Figure \ref{Simu_Distribution} and Figure \ref{Simu_County_IC}. This shows that the proposed method is able to explore all the counties undergoing potential changes.
However, if a county has a higher infection rate on the current day, it is likely to have higher tests the next day. If the sampled positive proportion is small, the increment of CUSUM statistics will be likely to be $0$. If the distribution change happens in that county, the CUSUM test statistic will be likely to grow dramatically and we will continue to focus on that county further. See Figure \ref{Simu_County_OC}, when Adams County is out of control, the median number of the distributed tests of Adams County is significantly larger than the number of other counties. In this case, the CUSUM test statistic of that Adams County is more likely to increase dramatically, which is a good behavior of exploitation.

\begin{figure}[H]
    \begin{subfigure}{.5\textwidth}
        \centering
        \includegraphics[width=\linewidth]{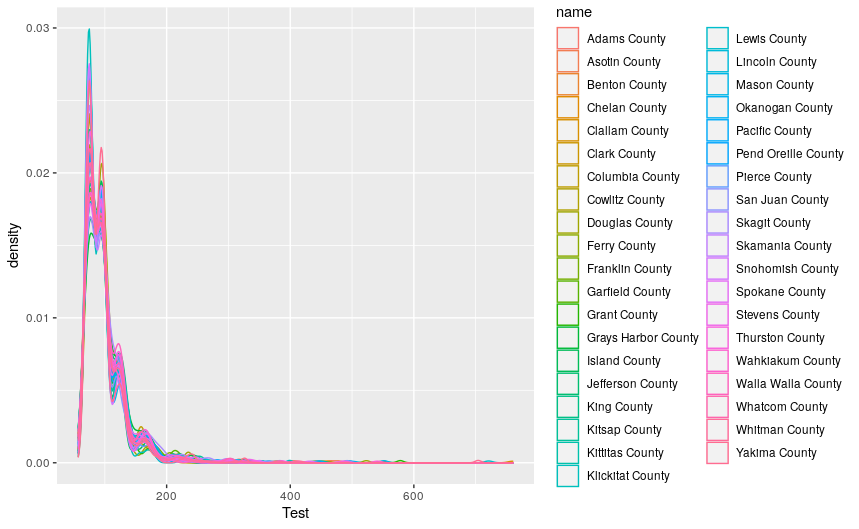}  
        \caption{Test Distributions Density In Control}
        \label{Simu_Distribution_IC}
    \end{subfigure}
    \begin{subfigure}{.5\textwidth}
        \centering
        \includegraphics[width=\linewidth]{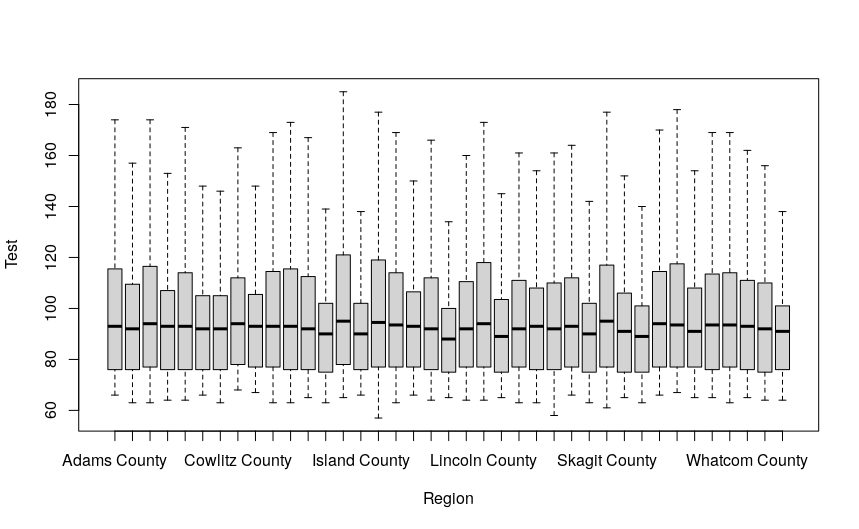}  
        \caption{Boxplot of Number of Tests In Control}
        \label{Simu_boxplot_IC}
    \end{subfigure}

        \caption{Visualization of Test Distributions In Control}
    \label{Simu_Distribution}
\end{figure}

\begin{figure}[H]
    \begin{subfigure}{.5\textwidth}
        \centering
        \includegraphics[width=\linewidth]{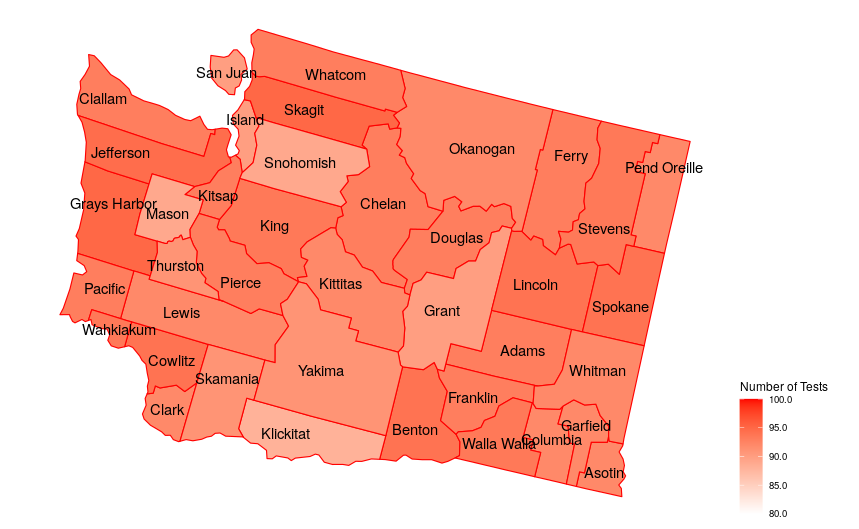}  
        \caption{Median Number of Tests In Control}
        \label{Simu_County_IC}
    \end{subfigure}
    \begin{subfigure}{.5\textwidth}
        \centering
        \includegraphics[width=\linewidth]{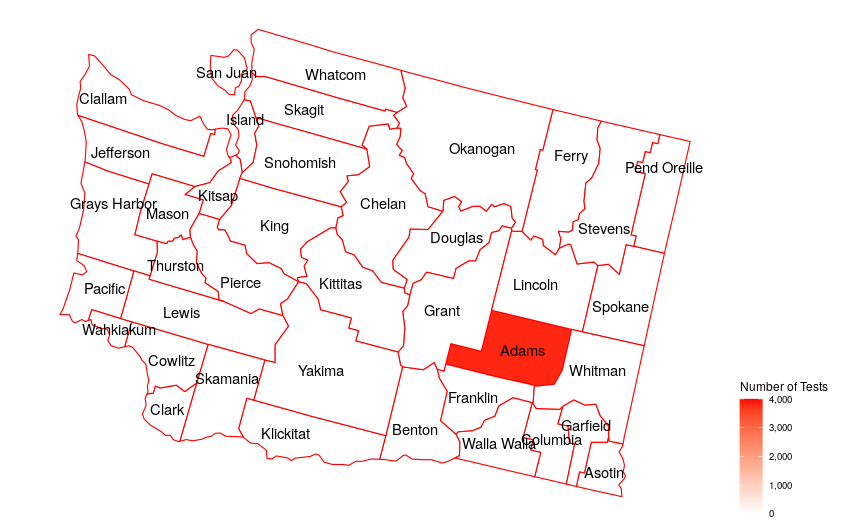}  
        \caption{Median Number of Tests Out of Control}
        \label{Simu_County_OC}
    \end{subfigure}
    \caption{Median Number of Tests}
    \label{Simu_County}
\end{figure}

\section{Case Study}\label{sec:Case Study}

In this case study, we would like to evaluate the proposed methods to the real case study. The data we are working on is the county-level
daily positive cases, and the parameters are $K=39$, $p=0.01$, $q=0.05$, $a=19.5$, $b=1930.5$ and
$w=0.3$. The average number of tests in each county is $100$.

\subsection{Detection of the Out-of-control County}
The alarms were raised in Yakima county on Jun 19, 2020. The comparison figure of the test statistics is in Figure \ref{Test-Statistics}. Starting from day $135$, the infection rate of Yakima county started to show a dominating trend over the other counties. Therefore, the algorithm emphasizes on Yakima county to seek for potential risk. But the true infection rate is not large enough to produce a positive increment to the CUSUM statistics. Thus, we will see this oscillating trend. When the infection rate of Yakima county increases consistently, the CUSUM statistics will increase dramatically. Finally on June 21, 2021, the CUSUM statistics in Yakima County exceeds the threshold of $6.5$ and the alarm is raised. 

\begin{figure}[H]
    \centering
    \includegraphics[width=0.6\linewidth]{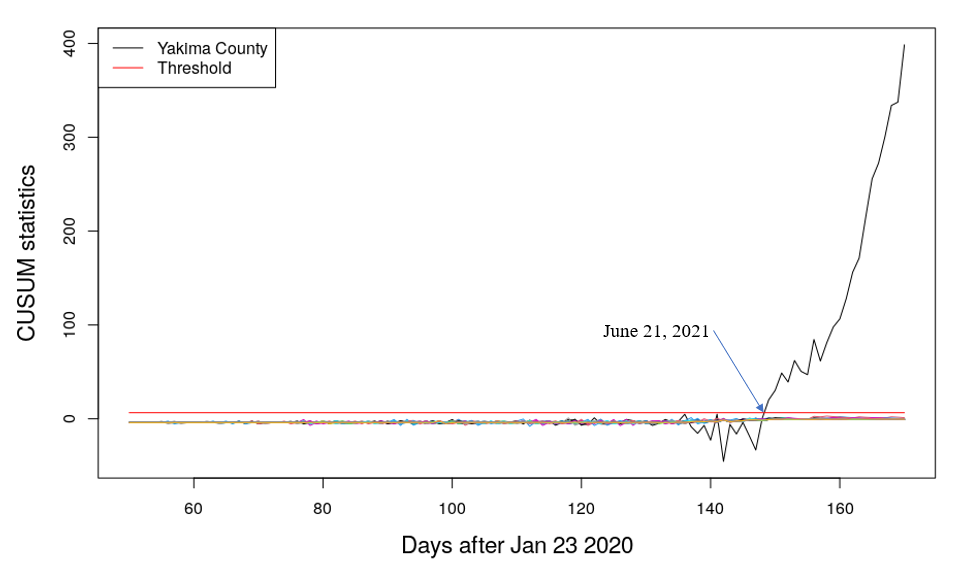}
    \caption{Comparison of the Test Statistics}
    \label{Test-Statistics}
\end{figure}


\subsection{Behavior of Sampling Resource Distribution}

The plots in Figure~\ref{Test-Distribution-Case} are examples of the test distribution of this case study before and after the alarm is raised. This shows similar behavior as the simulation study. The test distribution before the alarm is close to an even distribution to explore all the counties for the risks. The test distribution after the alarm focuses on  Yakima county and consistently monitors this county to show good behavior of exploitation.

\begin{figure}[H]
    \begin{subfigure}{0.5\textwidth}
        \centering
        \includegraphics[width=\linewidth]{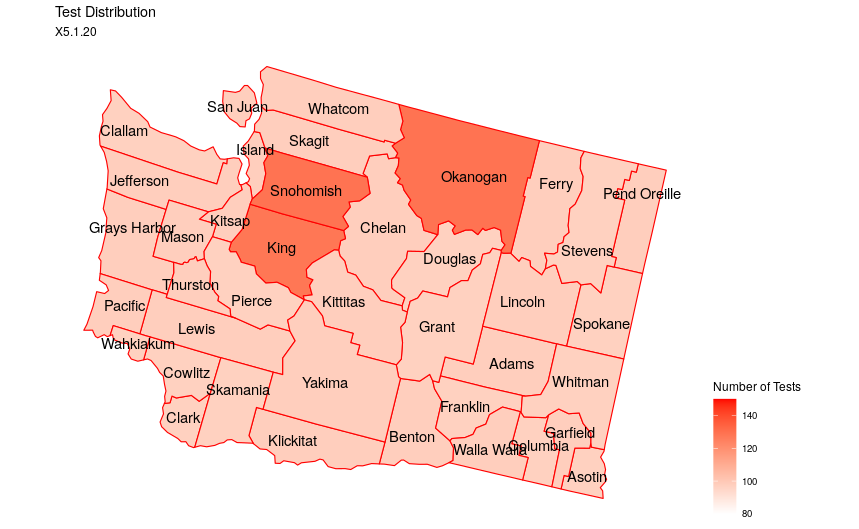}
        \caption{Example of test distribution in control}
        \label{Test-Case-IC}
    \end{subfigure}
    \begin{subfigure}{0.5\textwidth}
        \centering
        \includegraphics[width=\linewidth]{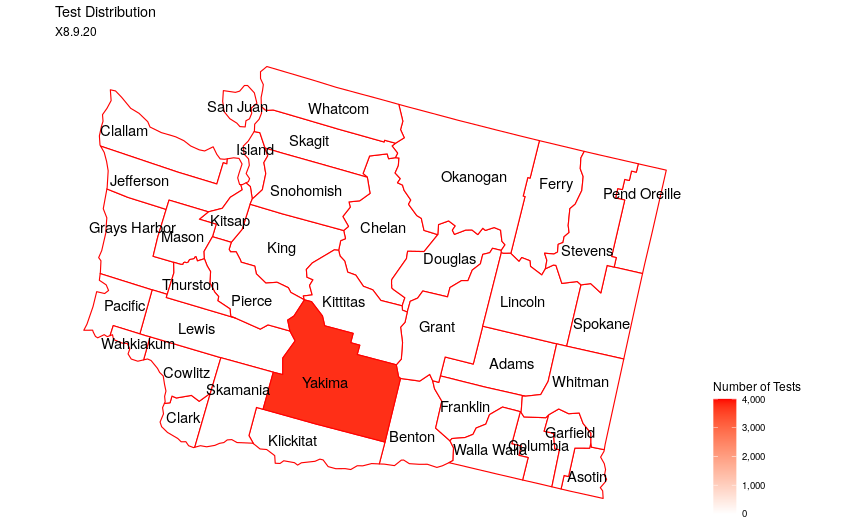}
        \caption{Example of test distribution out of control}
        \label{Test-Case-OC}
    \end{subfigure}
    \caption{Example of test distribution in WA}
    \label{Test-Distribution-Case}
\end{figure}


\section{Conclusion}\label{sec:Conclusion}
In conclusion, the proposed method is able to handle the challenge of exploration and exploitation trade-offs and limited resources. In the simulation work, we show that the adaptive sampling method can beat the two benchmarks either in the sense of detection delay or detection precision. In the case study in WA, the alarm was raised in Yakima county. This algorithm works well when there is only one out-of-control region. When there are multiple out-of-control regions, this algorithm will focus on the first region with a higher out-of-control risk. If there are multiple potential regions at risk, one can start the algorithm from the beginning again when an alarm is raised and remove the alerted region. In the future, we can address this concern and detect multiple out-of-control regions in a time series manner. 
Some other future work includes taking geometric distance and graphing into consideration and solving the detection problem when there are multiple out-of-control regions.

\bibliographystyle{plain}
\bibliography{references}

\appendix

\section{Proof of Proposition \ref{prop-CUSUM}} \label{proof-prop-CUSUM}
\begin{proof}
The distribution of $X_{k,t}$ is a binomial distribution with total counts $c_{k,t}$. In the null hypothesis
and alternative hypothesis, the probability is $p$ and $q$, respectively. Thus 
\[
f_{0}(X_{k,t})=\binom{c_{k,t}}{X_{k,t}}p^{X_{k,t}}(1-p)^{c_{k,t}-X_{k,t}}
\]
\[
f_{k}(X_{k,t})=\binom{c_{k,t}}{X_{k,t}}q^{X_{k,t}}(1-q)^{c_{k,t}-X_{k,t}}
\]
Plugging in the formula above to equation (\ref{eq-CUSUM}), we have 
\begin{align*}
\log\frac{f_{k}(X_{k,t})}{f_{0}(X_{k,t})} & =X_{k,t}\log\frac{q}{p}+(c_{k,t}-X_{k,t})\log\frac{1-q}{1-p}\\
 & =c_{k,t}\log\frac{1-q}{1-p}+X_{k,t}\log\frac{q(1-p)}{p(1-q)}
\end{align*}
And we have the result of equation (\ref{eq-Update-Single}). 
 
\end{proof}

\section{Proof of Proposition \ref{prop-posterior}} \label{proof-prop-posterior}
\begin{proof}
For binomial distribution, a reasonable conjugate prior distribution is the beta distribution. The posterior
distribution from a weighted update is 
\[
f(p_{k,t}|X_{k,1},\ldots,X_{k,T})\propto f_{0}(p_{k,t})\prod_{t=1}^{T}f(X_{k,t}|p_{k,t})^{w_{t}^{T}}
\]
where $w_{t}^{T}$ is a series of time decayed weights with increasing order. Here we set it to be $w_{t}^{T}=w^{T-t}$.
Recall that $X_{k,t}\sim binomial(c_{k,t},p_{k,t})$, we have 
\begin{align*}
f(p_{k,t}|X_{k,1},\ldots,X_{k,T}) & \propto f_{0}(p_{k,t})\prod_{t=1}^{T}f(X_{k,t}|p_{k,t})^{w_{t}^{T}}\\
 & \propto p_{k,t}^{a-1}(1-p_{k,t})^{b-1}\prod_{t=1}^{T}\left(p_{k,t}^{X_{k,t}}(1-p_{k,t})^{c_{k,t}-X_{k,t}}\right)^{w_{t}^{T}}\\
 & =p_{k,t}^{a+\sum_{t=1}^{T}w_{t}^{T}X_{k,t}-1}(1-p_{k,t})^{b+\sum_{t=1}^{T}w_{t}^{T}(c_{k,t}-X_{k,t})-1}
\end{align*}
This is the kernel of the distribution $Beta(a+\sum_{t=1}^{T}w^{T-t}X_{k,t},b+\sum_{t=1}^{T}w^{T-t}(c_{k,t}-X_{k,t}))$ 
\end{proof}

\section{Proof of Proposition \ref{prop-planning-single}} \label{proof-prop-planning-single}
\begin{proof} 
We refer to equation (\ref{eq-Increment-Single}) to get $\mu+\sigma$ of the increment. Recall that
$X_{k,t}\sim Binomial(c_{k,t},p_{k,t})$, $p_{k,t}\sim Beta(\alpha_{k,t},\beta_{k,t})$ and $c_{k,t}$
are the decision variables, which can be regarded as constant. We have 
\begin{align*}
E\Delta r_{k,t} & =c_{k,t}\log\frac{1-q}{1-p}+\log\frac{q(1-p)}{p(1-q)}E[E(X_{k,t}|p_{k,t})]\\
 & =c_{k,t}\log\frac{1-q}{1-p}+\log\frac{q(1-p)}{p(1-q)}E[c_{k,t}p_{k,t}]\\
 & =c_{k,t}\log\frac{1-q}{1-p}+\log\frac{q(1-p)}{p(1-q)}\frac{c_{k,t}\alpha_{k,t}}{\alpha_{k,t}+\beta_{k,t}}
\end{align*}
\begin{align*}
Var(X_{k,t}) & =Var(E[X_{k,t}|p_{k,t}])+E[Var(X_{k,t}|p_{k,t})]\\
 & =Var(c_{k,t}p_{k,t})+E(c_{k,t}p_{k,t}(1-p_{k,t}))%
\end{align*}
The first term is 
\begin{align*}
Var(c_{k,t}p_{k,t}) & =c_{k,t}^{2}\frac{\alpha_{k,t}\beta_{k,t}}{(\alpha_{k,t}+\beta_{k,t})^{2}(\alpha_{k,t}+\beta_{k,t}+1)}
\end{align*}
The second term is 
\begin{align*}
E[c_{k,t}p_{k,t}(1-p_{k,t})] & =c_{k,t}E[p_{k,t}]-c_{k,t}E[p_{k,t}^{2}]\\
 & =c_{k,t}\frac{\alpha_{k,t}}{\alpha_{k,t}+\beta_{k,t}}-c_{k,t}\left[\frac{\alpha_{k,t}\beta_{k,t}}{(\alpha_{k,t}+\beta_{k,t})^{2}(\alpha_{k,t}+\beta_{k,t}+1)}+\frac{\alpha_{k,t}^{2}}{(\alpha_{k,t}+\beta_{k,t})^{2}}\right]\\
 & =c_{k,t}\frac{\alpha_{k,t}\beta_{k,t}}{(\alpha_{k,t}+\beta_{k,t})(\alpha_{k,t}+\beta_{k,t}+1)}
\end{align*}
Then, we can calculate the variance of $\Delta r_{k,t}$ as 
\begin{align*}
\sigma^{2} & =\left(\log\frac{q(1-p)}{p(1-q)}\right)^{2}Var(X_{k,t})\\
 & =\left(\log\frac{q(1-p)}{p(1-q)}\right)^{2}c_{k,t}\frac{\alpha_{k,t}\beta_{k,t}}{(\alpha_{k,t}+\beta_{k,t})(\alpha_{k,t}+\beta_{k,t}+1)}(\frac{c_{k,t}}{\alpha_{k,t}+\beta_{k,t}}+1)
\end{align*}
Recall that the target function is formulated to be the sum of the mean and standard deviation. We also
have a constrain that $\sum_{k=1}^{K}c_{k,t}=C$. Then when we sum up $E\Delta r_{k,t}$, the first
term of the expectation part will be a constant and can be left out from the target function. We also
notice that when taking the squared root of $\sigma^{2}$, the coefficient will be the same as the coefficient
of the second term in $E\Delta r_{k,t}$. Thus, this coefficient can also be removed. The simplified
reward function can now be formulated as a formula (\ref{eq-Target-Independent}) and the tests distribution
of next day $t+1$ is to solve the optimization problem (\ref{eq-Planning-Independent}) 
\end{proof}

\section{Sensitivity Analysis} \label{sensitivity}


In this section, we will study the effect of hyperparameters $w$ and $a, b$ on the average run length by performing an additional simulation study. Here, we will start with the effect of $w$ on the average run length. When $a=39/2$, $b=39/2*99$, the average run length is around $200$, when $w=0.01,0.3,0.6$. The detection delay and detection precision are in Table~\ref{tab:comparison}. 

    \begin{table}[h]
        \centering
        \caption{Comparison of detection delay ($\mathrm{ARL_1}$) and Detection precision (DP) for different weight $w$}        
        \begin{tabular}{|c|c|c|c|c|c|}
        \hline
            & $w$ & $q=0.025$ & $q=0.03$ & $q=0.04$ & $q=0.05$\tabularnewline 
        \hline
        \multirow{3}{*}{$\mathrm{ARL_1}$} &
                  $0.01$ & $9.362$ & $5.23$ & $3.391$ & $\mathbf{2.843}$\tabularnewline
                  \cline{2-6}
                  &$0.3$ & $7.893$ & $4.958$ & $\mathbf{3.388}$ & $2.863$\tabularnewline
                  \cline{2-6}
                  & $0.6$ & $\mathbf{7.293}$ & $\mathbf{4.793}$ & $3.438$ & $2.88$\tabularnewline
                  \hline
        \multirow{3}{*}{DP} &
                  $0.01$ & $0.907$ & $0.919$& $ 0.928$ & $0.931$\tabularnewline
                  \cline{2-6}
                  &$0.3$ & $0.918$ & $\mathbf{0.932}$ & $\mathbf{0.938}$ & $\mathbf{0.939}$\tabularnewline
                  \cline{2-6}
                  & $0.6$ & $\mathbf{0.919}$& $0.929$& $0.937 $ & $0.939$\tabularnewline
                  \hline
        \end{tabular}
        \label{tab:comparison}
    \end{table}
    
From Table \ref{tab:comparison}, we can conclude that when $q$ is small, a smaller $w$ results a larger $\mathrm{ARL_1}$ or higher detection delay. However, $w=0.3$ and $w=0.6$ yield similar $\mathrm{ARL_1}$.
When studying one specific iteration, we found that the hyperparameter $w$ affects the distribution of the testing kits. 
Figure \ref{fig:first region} is the box-plot of a region for different weights as $w=0.01, 0.3, 0.6$. We can see that the median of test distributions is similar for different weights. However, when $w$ is large, it results in more days with extremely large numbers of testing kits. This is because when $w$ is large, it puts more weight on the previous days to update the posterior distribution of the infection rate. Therefore, it is harder to reduce the number of testing kits the next day compared to when $w$ is small. 
    
    \begin{figure}[t]
        \centering
        \includegraphics[width=0.8\linewidth]{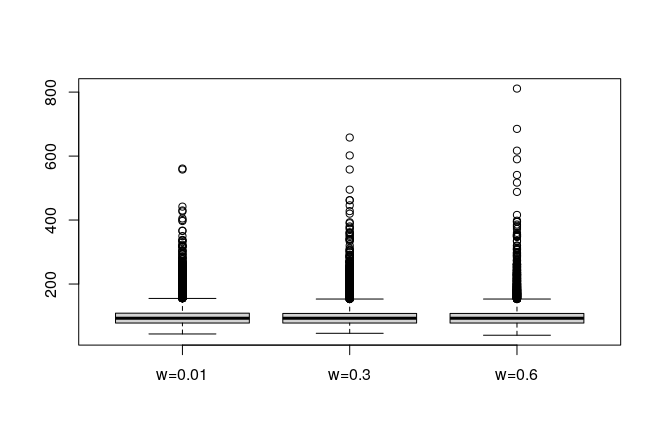}
        \caption{Boxplot of the testing kit distribution in one county before the change for different $w$}
        \label{fig:first region}
    \end{figure}
    \begin{figure}[h]
        \centering
        \includegraphics[width=0.8\linewidth]{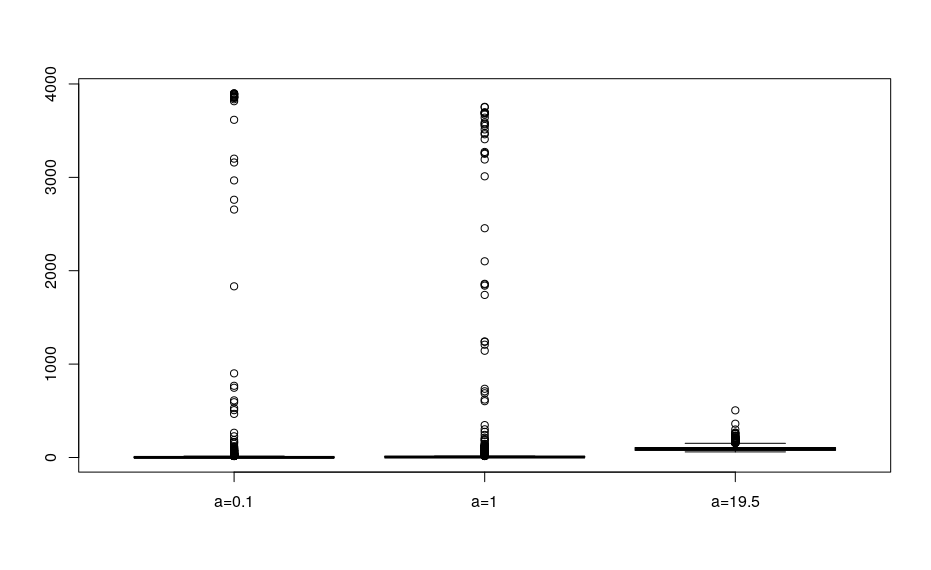}
        \caption{Boxplot of the testing kit distribution in one county before the change for different $a$}
        \label{fig:tuning a}
    \end{figure}

We also studied the impact of different $a$ in the prior distribution, given that the ratio of $a$ and $b$ are kept the same in the simulation study. The testing kit distribution of one county in the in-control state is shown in Figure \ref{fig:tuning a}. 
Intuitively, a higher $a$ makes it harder to distribute more tests in a region. For example, if $a$ is 19.5, the algorithm will allocate fewer tests in the region with $10$ tests and $2$ positive cases than when $a$ is $1$. The simulated result of detection delay and detection precision when the average run length in control is around 200 is as Table \ref{tab:comparison1}. We can see that when $a$ increases from $0.1$ to $19.5$, the detection delay decreases, and the detection precision increases.
Therefore, the choice of $a$ and $b$ depends on how much weight we would like to rely on the prior knowledge.
\begin{table}[h]
        \centering
        \caption{Comparison of detection delay ($\mathrm{ARL_1}$) and Detection precision (DP) for different $a$}        
        \begin{tabular}{|c|c|c|c|c|c|}
        \hline
            & $a$ & $q=0.025$ & $q=0.03$ & $q=0.04$ & $q=0.05$\tabularnewline 
        \hline
        \multirow{3}{*}{$\mathrm{ARL_1}$} &
                  $0.1$ & $17.463$ & $10.229$ & $8.097$ & $7.247$\tabularnewline
                  \cline{2-6}
                  &$1$ & $12.453$ & $8.064$ & $6.214$ & $5.373$\tabularnewline
                  \cline{2-6}
                  & $19.5$ & $7.893$ & $4.958$ & $3.358$ & $2.863$\tabularnewline
                  \hline
        \multirow{3}{*}{DP} &
                  $0.1$ & $0.88$ & $0.91$& $ 0.925$ & $0.93$\tabularnewline
                  \cline{2-6}
                  &$1$ & $0.901$ & $0.923$ & $0.93$ & $0.939$\tabularnewline
                  \cline{2-6}
                  & $19.5$ & $0.918$ & $0.932$ & $0.938$ & $0.939$\tabularnewline
                  \hline
        \end{tabular}
        \label{tab:comparison1}
    \end{table}
    
\end{document}